%% file: neurips_2026_camera_ready.tex
\documentclass{article}

\usepackage[main, final]{neurips_2026}

\usepackage[utf8]{inputenc} 
\usepackage[T1]{fontenc}    
\usepackage{hyperref}       
\usepackage{url}            
\usepackage{booktabs}       
\usepackage{amsfonts}       
\usepackage{nicefrac}       
\usepackage{microtype}      
\usepackage{xcolor}         
\newcommand{\lliu}[1]{{\color{brown}{#1}}}

\usepackage{amsthm}
\theoremstyle{plain}
\newtheorem{theorem}{Theorem}[section]

\newtheorem{lemma}[theorem]{Lemma}

\theoremstyle{definition}
\newtheorem{definition}[theorem]{Definition}

\theoremstyle{remark}

\usepackage{algorithm}
\usepackage{algorithmic}

\usepackage{subcaption}
\usepackage{tabularx}
\usepackage{multirow}
\usepackage{array}
\newcolumntype{Y}{>{\centering\arraybackslash}X}
\newcolumntype{Z}[1]{>{\centering\arraybackslash}p{#1\linewidth}}
\usepackage{wrapfig}

\usepackage{graphicx}
\usepackage{booktabs}
\usepackage{caption}
\usepackage[table]{xcolor}
\usepackage{colortbl}
\definecolor{lightgray}{gray}{0.95}

\usepackage{pifont}
\newcommand{\xmark}{\ding{55}}
\definecolor{checkmark}{HTML}{40826D}
\definecolor{xmark}{HTML}{E62020}

\newcommand{\bccheck}{\large\checkmark}
\newcommand{\bccross}{\large\xmark}

\usepackage{amsmath,bm}
\usepackage{mdframed}
\usepackage{comment}
\usepackage{placeins}

\title{Alleviating Hallucination in Reasoning Tasks with Training-Free Uncertainty-Guided Steering}

\author{%
{Litian Liu$^1$\quad Qiqi Hou$^1$\quad Yubing Jian$^1$\quad Reza Pourreza$^1$}\\ [1pt] \textbf{Mohammad Ghavamzadeh$^1$\quad  Roland Memisevic$^1$\quad Yao Qin$^2$\quad Hong Cai$^1$}\\[5pt] $^1$Qualcomm AI Research\thanks{Qualcomm AI Research is an initiative of Qualcomm Technologies, Inc.} \quad $^2$UC Santa Barbara\\[5pt] {\texttt{litiliu@qti.qualcomm.com}}}

\begin{document}

\maketitle

\begin{abstract}
  Recent work on hallucination detection in large language models has shown that, for a fixed pre-trained model 
  and reasoning task, it is possible to estimate the model’s confidence in the correctness of its outputs. 
  Such uncertainty estimates have primarily been used to improve truthfulness by detecting or filtering confabulations. 
  In this work, we ask whether these signals can instead be used more proactively to directly improve the accuracy 
  of model-generated answers.
  We propose USteer, a simple, training-free \emph{steering} mechanism that adjusts a model’s layer-wise activations during inference 
  using the gradient of a confidence with respect to the activations. This procedure nudges generation toward outputs 
  with lower uncertainty at inference time, without modifying model parameters or requiring additional 
  supervision. We show that this approach consistently reduces hallucination across a range of tasks, 
  demonstrating that confidence signals can be leveraged not only for detection, but also for effective 
  inference-time control of model behavior.
\end{abstract}

\vspace{8mm}

\section{Introduction}

Large language models (LLMs) have demonstrated remarkable capabilities in complex reasoning, yet they remain susceptible to generating plausible but incorrect responses, commonly termed hallucinations \citep{ji2023survey,huang2024survey}. 
Consequently, significant research has focused on hallucination detection, using inference-time uncertainty or confidence estimates to signal when a model should abstain from answering \citep{xin2021abstention,abbasi2024conformal,lin2023generating, lin2022towards, manakul2023selfcheckgpt, xiao2021hallucination, kuhn2023semantic, chen2024inside, liu2025enhancing, HouLQAC024, jiang2023calibrating, gao2024spuq}.

While effective for filtering, these post-hoc detection mechanisms are inherently reactive. 
In this work, we propose a shift toward a more proactive paradigm: proactive inference-time steering. 
Rather than merely detecting a hallucination after it has begun, can we leverage internal uncertainty signals to "steer" the model’s generation process toward more reliable and truthful regions of its representation space?

Recent efforts in steering, \citep{li2023inference, zhang2024truthx, zou2023representation, chen2024truth},  have attempted this by learning fixed "truthfulness" directions through supervised probes. 
However, these methods rely on static displacement vectors that are applied uniformly across all decoding steps. 
This approach faces a critical bottleneck in the domain of complex reasoning, where the semantic role of intermediate representations is highly dynamic and evolves throughout a multi-step chain of thought. Furthermore, these methods require additional training and are sensitive to shifts between the probe's training data and the actual inference-time distribution.

In this paper, we introduce \textbf{USteer}, a simple, training-free steering framework that dynamically adjusts model activations during the forward pass. 
Our approach is motivated by a key observation: by applying the "logit lens" to intermediate transformer layers, we find that discriminative uncertainty signals—specifically those derived from Neural Collapse Inspired (NCI) scores in \cite{liu2026out}—emerge well before the final layer. 
These intermediate layers act as an internal ensemble, providing "real-time" feedback on the reliability of the ongoing inference process.

USteer leverages this discovery by treating the uncertainty metric as a differentiable guidance objective, as shown in Figure~\ref{fig:main}(a). 
At each decoding step, USteer computes the gradient of the uncertainty score with respect to intermediate hidden representations and performs a \emph{single-step} "nudge" toward regions of higher confidence. 
Because this intervention is gradient-based and on-the-fly, it is inherently adaptive to the evolving state of the reasoning chain without requiring any supervised training or parameter updates.

\begin{figure*}[t]
\vspace{2mm}
    \centering
    \begin{subfigure}[t]{.57\textwidth}
        \centering
        \includegraphics[width=\textwidth]{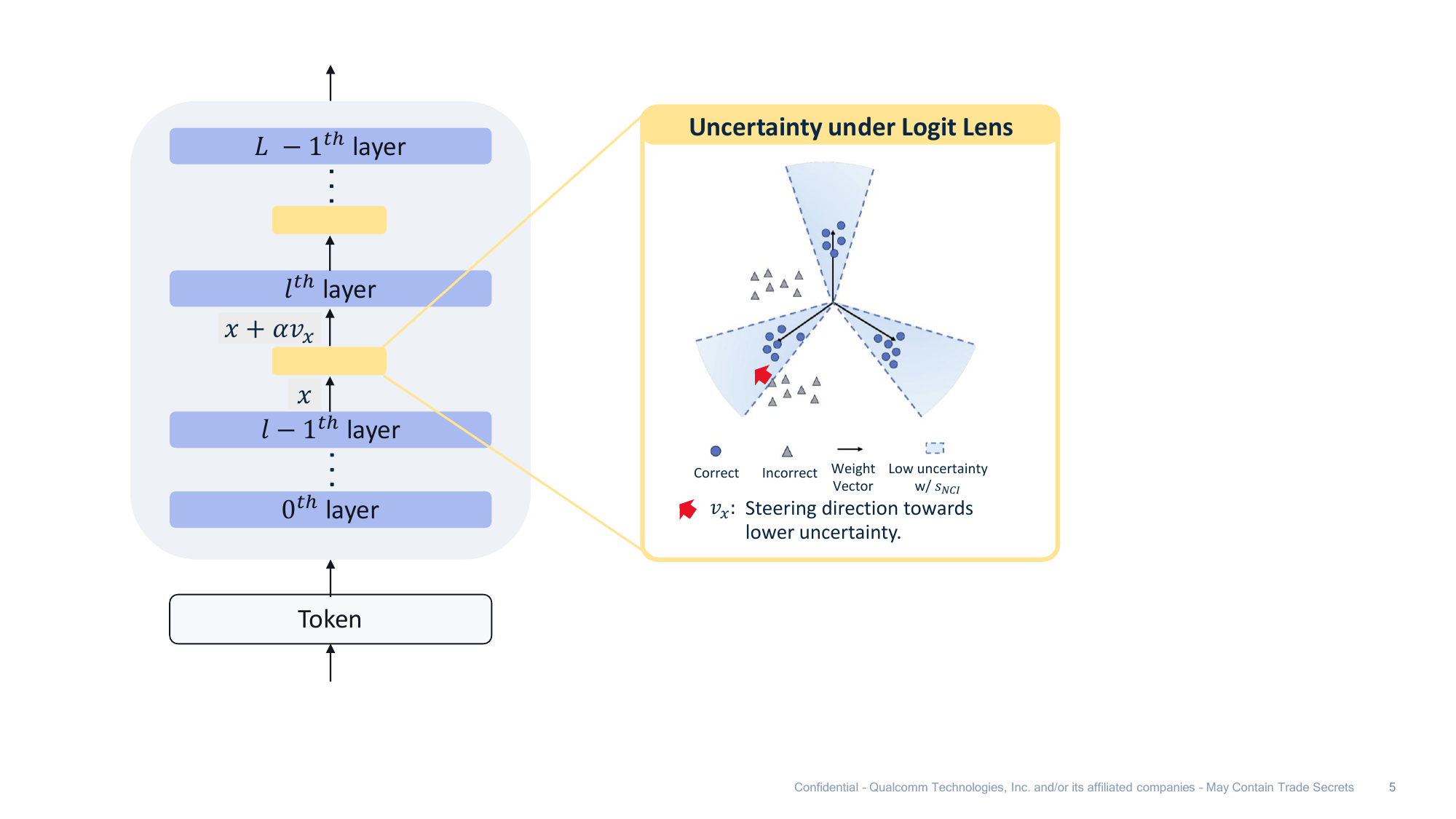}
        \vspace{1mm}
        \caption{Scheme Illustration.}
        \label{fig:sub1}
    \end{subfigure}
    \hfill
    \begin{subfigure}[t]{.42\textwidth}
        \centering
        \includegraphics[width=\textwidth]{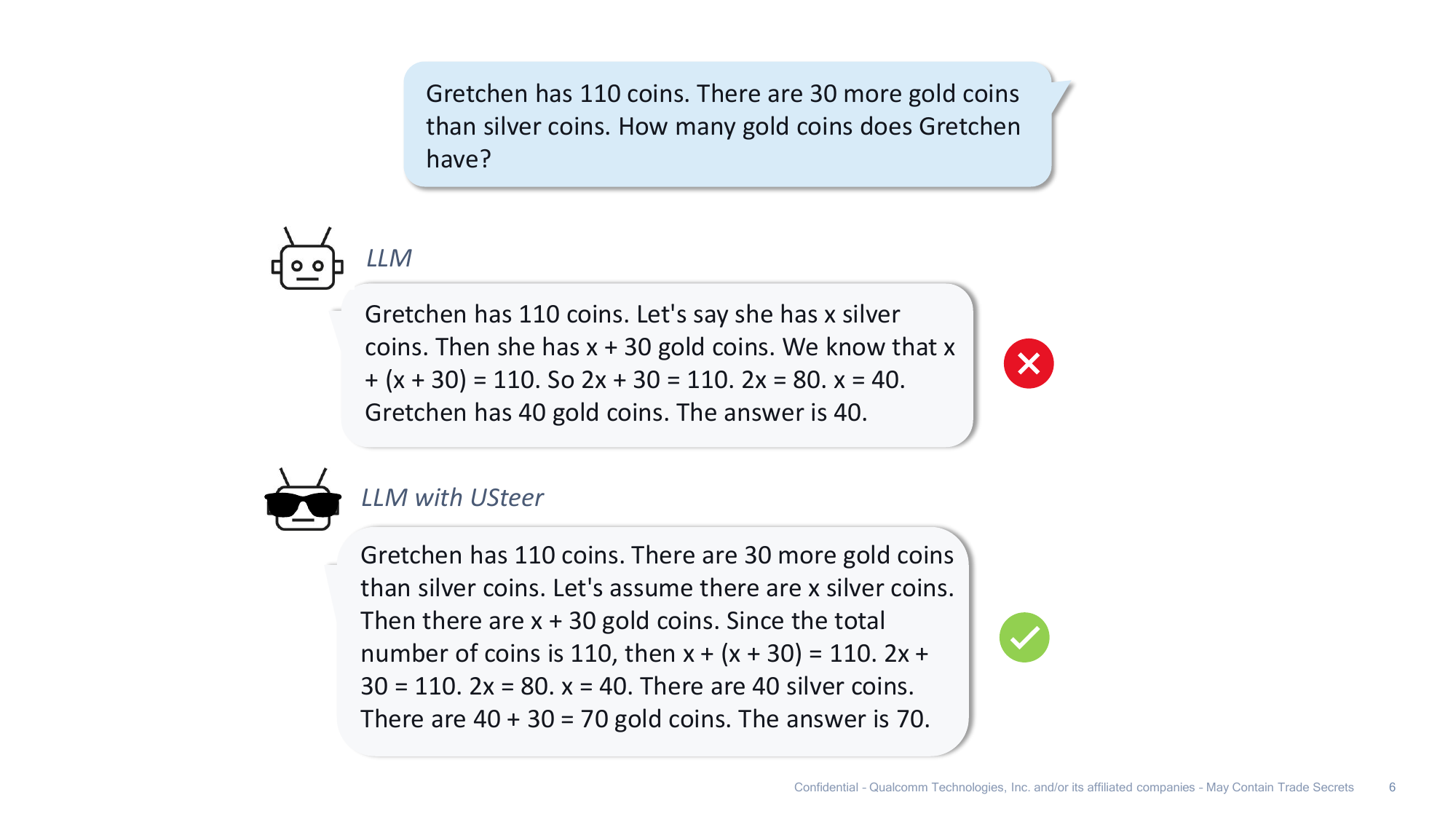}
        \vspace{1mm}
        \caption{Example Outputs.}
        \label{fig:sub2}
    \end{subfigure}
    \vspace{2mm}
    \caption{\textbf{Overview of USteer, a training-free uncertainty-guided steering method.} 
    (a) \textit{Scheme Illustration.}
    We intervene on output of intermediate transformer layers, with the steering mechanism highlighted in yellow.
    USteer is training-free, where steering directions are derived from uncertainty signals defined by the language modeling head.
    Under the logit lens, we evaluate uncertainty scores based on NCI~\citep{liu2026out}) and steer representations toward regions associated with lower uncertainty.
    (b) \textit{Example Outputs.}
    USteer reduces hallucination, as illustrated on an example from the GSM8K dataset using \texttt{Llama-3.2-1B-Instruct}. 
    }
    \label{fig:main}
\end{figure*}

We evaluate USteer across diverse reasoning benchmarks, including CommonsenseQA, StrategyQA, GSM8K, and AQuA, on multiple model families including Llama-3.2, Qwen-2.5, and Qwen-3. 
Our results show that USteer consistently improves reasoning accuracy while reducing hallucinations across tasks and models. 
Despite operating through inference-time interventions on intermediate representations, USteer introduces only negligible latency overhead. 
These results demonstrate that USteer provides a practical and robust framework for uncertainty-guided inference-time control in large language models.

\section{Related Work}

Among inference-time approaches for modifying large language model behavior, our work belongs to the class of steering methods, also known as activation editing. 
Prior work has explored activation steering across a variety of applications. 
For example, \citet{subramani2022extracting, turner2023steering} demonstrate that steering can induce sentiment transfer in language models, noting that these effects are primarily mediated by higher layers. Moving beyond stylistic adjustments, \citet{mahmoud2025improving} show that steering can enhance cross-lingual understanding by shifting representations of non-English tokens toward their English counterparts. Recent work has also introduced steering to mitigate safety risks; for instance, \citet{wang2025steering} employ steering to counter jailbreaking and toxic generation by directing representations away from adversarial feature spaces.

Closest to our work are methods leveraging steering for hallucination mitigation in knowledge-intensive tasks~\citep{li2023inference, zhang2024truthx, zou2023representation, chen2024truth}. 
These approaches typically train supervised probes to extract a static "truthfulness" direction, applying a fixed displacement vector across all decoding steps. 
However, such static interventions face challenges in complex multi-step reasoning tasks, where intermediate representations evolve dynamically. 
To bridge this gap, USteer introduces a training-free, dynamic steering paradigm, leveraging model-internal uncertainty signals from hallucination detection literature~\citep{liu2026out} on-the-fly. 
By dynamically nudging representations toward more confident regions of the activation space at each step, we improve reasoning accuracy via steering.

Another parallel line of work aims to enhance LLM truthfulness and reasoning via contrastive decoding, which modifies the output probability distribution by contrasting the predictions of an "expert" model against an "amateur" baseline~\citep{li2023contrastive, kai2024sh2, zhang2025alleviating}. This family of methods operates under the assumption that subtracting the amateur distribution from the expert distribution can amplify factual signals and suppress common linguistic biases or "surface-level" heuristics. In particular, DoLa~\citep{chuang2023dola} extends this logic to a single model by treating the final layer as the expert and earlier layers as amateurs, contrasting their respective logits to emphasize knowledge that only matures in the deeper layers of the network.
Our work shares a similar intuition by viewing the internal layers of a single LLM as an ensemble. However, instead of performing contrastive token selection at the vocabulary level, we extract localized uncertainty signals from these early layers to dynamically steer the underlying hidden representations.

\section{Methodology}\label{sec:method}

In this section, we introduce USteer, a training-free steering method that guides intermediate representations toward regions of lower uncertainty to mitigate hallucinations.
We first introduce the uncertainty metric, then provide empirical evidence that uncertainty-related signals emerge in intermediate layers, and finally describe the steering algorithm.

\subsection{Setup}

Consider a Large Language Model (LLM) $f$ with model dimension $d_{\text{model}}$ and vocabulary $\mathcal{V}$. 
At decoding step $t$, given the input sequence $\bm{x}$ and previously generated tokens $\bm{y}_{<t}$, the model produces a hidden representation $\bm{z}_t \in \mathbb{R}^{d_{\text{model}}}$. 
This representation is mapped by the language head $f_{\text{head}}$ to produce logits over $\mathcal{V}$.
The language head functions as a linear classifier where the probability of the next token is determined by the proximity of $\bm{z}$ to a set of learned weight vectors $\{\bm{w}_v\}_{v \in \mathcal{V}}$, corresponding to the unembedding weights of the linear output layer.

In state-of-the-art models, these heads are typically zero-biased. 
Under greedy decoding, the predicted token $\hat{c}$ is identified as:
\vspace{3mm}
\begin{equation}
    \hat{c} = \arg\max_{v \in \mathcal{V}} (\bm{w}_v^\top \bm{z})
\end{equation}

To quantify the uncertainty associated with this prediction, we adapt the Neural Collapse Inspired (NCI) score, which was first introduced for out-of-distribution detection in classification tasks by \cite{liu2025detecting} and subsequently adapted for post-hoc hallucination detection in LLM reasoning tasks by \cite{liu2026out}. 
The NCI score has been shown to be highly effective for detecting hallucinations in reasoning tasks while remaining computationally inexpensive, making it particularly suitable for inference-time steering and hallucination mitigation.
Specifically, NCI measures the geometric alignment between a hidden state and the weight vector of the chosen token, as illustrated in Figure~\ref{fig:main}. 
To facilitate the gradient-based steering introduced later in this paper, we utilize a squared, simplified version of the score:
\begin{definition}[Confidence Guidance Score]\label{def:nci}  
\textit{Adapted from \cite{liu2026out}.} 
Given a hidden representation $\bm{z}$ and the weight vector $\bm{w}_{\hat{c}}$ corresponding to the most-likely token $\hat{c}$, the confidence guidance score $g(\bm{z})$ is defined as the squared cosine similarity:
\vspace{2mm}
\begin{equation}\label{eq:generalScore}
g(\bm{z}) = \frac{(\bm{w}_{\hat{c}}^\top \bm{z})^2}{|\bm{z}|_2^2},
\end{equation}
where $\hat{c} = \arg\max_{v \in \mathcal{V}} (\bm{w}_v^\top\bm{z})$ is the most likely token given the embedding. 
\end{definition} 
\vspace{1mm}
Geometrically, a higher $g(\bm{z})$ indicates that the representation is well-aligned with the target token's semantic direction, signifying high confidence. Conversely, a lower score suggests the representation resides in a "confused" region of the decision landscape—often near decision boundaries—which serves as a reliable precursor to hallucination.
We additionally evaluate alternative uncertainty metrics in Section~\ref{sec:ablation_metric}, demonstrating that the generalizability of proposed framework beyond NCI score.

\subsection{Observation: Uncertainty Signal Emerges at Intermediate Layers.}

While uncertainty is conventionally measured at the final layer, we hypothesize that the model’s internal state exhibits discriminative uncertainty signals much earlier in the computation graph. 
This hypothesis is rooted in the architecture of modern LLMs: due to the prevalence of residual (skip) connections, the model can be viewed as an iterative refinement process or an ensemble of "weak-to-strong" layers where semantic features are gradually sharpened.

To test this, we employ the logit lens—projecting intermediate activations $\bm{x}^l$ into the vocabulary space using the pre-trained final language head. 
Particularly, since state-of-the-art models typically apply a normalization step (e.g., RMSNorm) immediately before the language head to ensure numerical stability and proper feature scaling, 
we mirror this processing to ensure that intermediate features are compatible with the language head's weights. 

Specifically, for the output of layer $l$, $\bm{x}^l$, we compute the normalized representation $\bm{z}^l$ as:

\begin{equation}\label{eq:scores}
\bm{z}^{l} = \bm{D} \frac{\bm{x}^{l}}{\mathrm{RMS}(\bm{x}^{l})}
\end{equation}

where $\bm{D}$ is the diagonal scaling matrix (the learned gain parameters) from the model's final normalization layer and RMS represents the root mean square over feature dimension $d$

\begin{equation}
\mathrm{RMS}(\bm{x}) = \sqrt{\frac{1}{d} \sum_{i=1}^{d} x_{i}^2 + \epsilon}.
\end{equation}

Using this normalized representation, we identify the most likely token at layer $l$ under the logit lens:
\vspace{2mm}
\begin{equation}
    \hat{c}_l = \arg\max_{v \in \mathcal{V}} (\bm{w}_v^\top \bm{z}^l)
\end{equation}
The intermediate uncertainty guidance score is then calculated as $g(\bm{z}^l)$ following Definition \ref{def:nci}, using the layer-specific prediction $\hat{c}_l$.

\begin{figure*}[t]
\begin{center}
\vspace{2mm}
\includegraphics[width=\textwidth]{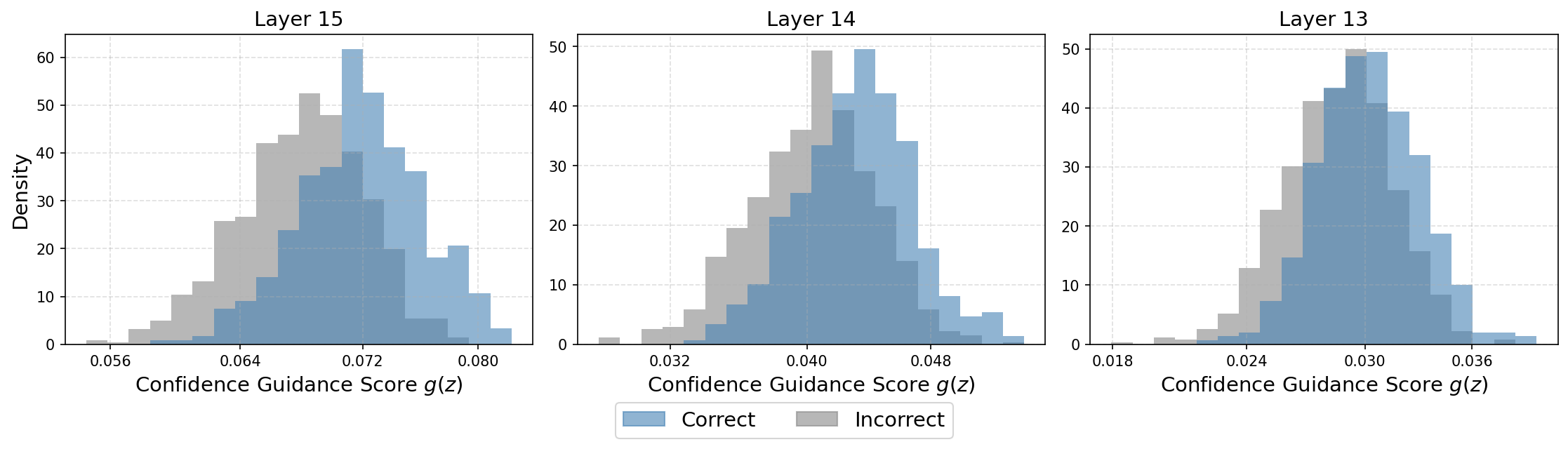}
\vspace{1mm}
\caption{
\textbf{Uncertainty Signals Emerge at Intermediate Layers of LLM.}
For each layer, we compute the score from Equation~\ref{eq:generalScore} across steps under the logit lens and compare its distribution for non-hallucinated (correct) and hallucinated (incorrect) examples.
Correct intermediate embeddings tends to exhibit higher confidence guidance score.
While the distinction between correct and incorrect is strongest at the final layer (Layer 15), intermediate layers (Layer 14 and Layer 13) already show such trend.
This demonstrates that the language head provides meaningful uncertainty signals before the final layer, motivating steering at intermediate representations.
Evaluation is conducted on GSM8K using \texttt{Llama-3.2-1B-Instruct}.
}\label{fig:observation}
\end{center}
\end{figure*}

\vspace{2mm}
\textbf{Empirical Evidence:} In Figure \ref{fig:observation}, we analyze the distribution of $g(\bm{z}^l)$ for intermediate representations.
While the distinction between hallucination and non-hallucination predictions is most pronounced at the final layer, the trend emerges several layers earlier. 
Specifically, non-hallucinated intermediate embeddings tend to exhibit higher alignment with their corresponding weight vectors, indicating lower uncertainty, whereas those associated with hallucinated generations exhibit lower alignment and thus higher uncertainty.
We further confirm that the intermediate-layer uncertainty guidance score is discriminative using AUROC across multiple models, datasets, and layers, with results reported in Appendix~\ref{app:auroc}.
This early emergence of uncertainty signals confirms that intermediate layers contain sufficient information to identify potential failures, providing a "real-time" signal to steer the model toward more confident representations in inference.

\subsection{\emph{USteer:} Uncertainty-guided Steering at Intermediate Layers.}

The goal of USteer is to dynamically intervene on the intermediate hidden state $\bm{x}^l$ such that its normalized counterpart $\bm{z}^l$ is nudged toward a region of higher confidence. For maximum efficiency, we achieve this through a single-step gradient ascent update during the forward pass.

To find the optimal steering direction, we first calculate the gradient of the confidence guidance score in the normalized space where the language head resides.

\vspace{3mm}
\begin{lemma}[Gradient of the Confidence Guidance Score]
\label{lemma:gradient}
Let $g(\bm{z}) = \frac{(\bm{w}_{\hat{c}}^\top \bm{z})^2}{\|\bm{z}\|_2^2}$ be the confidence guidance score where $\bm{w}_{\hat{c}}$ is the target weight vector. 
The gradient with respect to the normalized representation $\bm{z}$ is given by:
\vspace{2mm}
\begin{equation}\label{eq:gradient_z}
\nabla_{\bm{z}} g(\bm{z}) = \frac{2(\bm{w}_{\hat{c}}^\top \bm{z})}{\|\bm{z}\|_2^2} \left( \bm{w}_{\hat{c}} - \frac{\bm{w}_{\hat{c}}^\top \bm{z}}{\|\bm{z}\|_2^2} \bm{z} \right)
\end{equation}
\end{lemma}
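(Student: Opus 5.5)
The plan is to treat $g$ as a quotient of two smooth functions of $\bm{z}$ and apply the quotient rule. Throughout, the index $\hat{c}$ is held fixed. This is the same convention as Definition~\ref{def:nci}.

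\emph{Justification for freezing $\hat{c}$.} The map $\bm{z}\mapsto\hat{c}(\bm{z})$ is piecewise constant. It is constant on the open set where the argmax is unique, which contains a neighborhood of every $\bm{z}$ with no tie among the logits. On such a neighborhood, $g$ coincides with the smooth function $\bm{z}\mapsto(\bm{w}_{\hat{c}}^\top\bm{z})^2/\|\bm{z}\|_2^2$ on $\mathbb{R}^{d_{\text{model}}}\setminus\{0\}$. The formula is therefore valid wherever $\bm{z}\neq 0$ and the argmax is unique. At ties, one interprets $\nabla_{\bm{z}}g$ as the gradient with the selected token frozen, which is exactly how it is used for steering.

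\emph{Computing the gradient.} Write $a(\bm{z})=(\bm{w}_{\hat{c}}^\top\bm{z})^2$ and $b(\bm{z})=\|\bm{z}\|_2^2=\bm{z}^\top\bm{z}$. The chain rule on the scalar square gives
\[
\nabla a=2(\bm{w}_{\hat{c}}^\top\bm{z})\,\bm{w}_{\hat{c}},
\qquad
\nabla b=2\bm{z}.
\]
The quotient rule $\nabla(a/b)=(b\nabla a-a\nabla b)/b^2$ then yields
\[
\nabla_{\bm{z}}g=\frac{2(\bm{w}_{\hat{c}}^\top\bm{z})\|\bm{z}\|_2^2\,\bm{w}_{\hat{c}}-2(\bm{w}_{\hat{c}}^\top\bm{z})^2\,\bm{z}}{\|\bm{z}\|_2^4}.
\]

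\emph{Matching the stated form.} Factor out $2(\bm{w}_{\hat{c}}^\top\bm{z})/\|\bm{z}\|_2^2$. What remains is $\bm{w}_{\hat{c}}-\frac{\bm{w}_{\hat{c}}^\top\bm{z}}{\|\bm{z}\|_2^2}\bm{z}$, which is exactly \eqref{eq:gradient_z}.

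\emph{Sanity check.} The bracketed vector is the component of $\bm{w}_{\hat{c}}$ orthogonal to $\bm{z}$. Hence $\bm{z}^\top\nabla_{\bm{z}}g=0$, consistent with $g$ being scale-invariant (homogeneous of degree $0$, so Euler's identity forces this). It is a cheap check that the algebra is right.

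The only genuine subtlety is the non-differentiability of the argmax at ties, together with the exclusion $\bm{z}\neq0$. I would handle this with the local-constancy remark above rather than any limiting argument.
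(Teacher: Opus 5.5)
Your proposal is correct and matches the paper's proof: it also sets $u=(\bm{w}_{\hat{c}}^\top\bm{z})^2$ and $v=\|\bm{z}\|_2^2$, applies the quotient rule with $\nabla u = 2(\bm{w}_{\hat{c}}^\top\bm{z})\bm{w}_{\hat{c}}$ and $\nabla v = 2\bm{z}$, and factors out $2(\bm{w}_{\hat{c}}^\top\bm{z})/\|\bm{z}\|_2^2$. Your other points are sound additions that the paper leaves implicit: holding $\hat{c}$ fixed via local constancy of the argmax away from ties, requiring $\bm{z}\neq 0$, and the check $\bm{z}^\top\nabla_{\bm{z}}g=0$.
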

\vspace{2mm}

\begin{proof}
    See Appendix~\ref{app:proof}. 
\end{proof}

While the uncertainty signal is derived from the normalized state $\bm{z}^l$, the intervention must occur in the pre-normalization space $\bm{x}^l$ to remain consistent with the model’s residual stream. We propagate the gradient back through the RMSNorm layer using the chain rule:
\begin{equation}\label{eq:gradient_x}
\vspace{1mm}
\begin{aligned}
    \nabla_{\bm{x}^{l}} g 
    &= \left( \frac{\partial \bm{z}^{l}}{\partial \bm{x}^{l}} \right)^\top \nabla_{\bm{z}^{l}} g \\
    &= \frac{1}{\|\bm{x}^{l}\|_2}
    \left( I - \frac{\bm{x}^{l} (\bm{x}^{l})^\top}{\|\bm{x}^{l}\|_2^2} \right)
    \bm{D} \, \nabla_{\bm{z}^{l}} g,
\end{aligned}
\end{equation}

The representation is then updated along the normalized gradient direction:
\begin{equation}
\vspace{1mm}
\bm{x}^l \leftarrow \bm{x}^l + \alpha \frac{\nabla_{\bm{x}^l} g}{\|\nabla_{\bm{x}^l} g\|_2}\end{equation}where $\alpha$ is a hyperparameter controlling the steering strength. 
The full procedure is summarized in Algorithm \ref{alg:USteer}.

\vspace{2mm}
\begin{algorithm}
\caption{\emph{USteer:} Training-Free Uncertainty-Guided Steering}\label{alg:USteer}
\begin{algorithmic}[1]
\STATE \textbf{input:} input prompt $\bm{p}$, steering strength $\alpha$,  steering layers $\mathcal{L}$. 
\STATE \textbf{output:} Model response $\bm{y} = [y_0, y_1, \dots, y_T]$. 
    \FOR{each decoding step $t$}
        \FOR{each layer $l$}
                \STATE Compute $\bm{x}^l$ using the potentially steered prior layer representations.
                \IF{\( l \in \mathcal{L} \)}
                    \STATE Steer layer output: $\bm{x}^l \leftarrow  \bm{x}^l + \alpha \frac{\nabla_{\bm{x}^l} g}{\|\nabla_{\bm{x}^l} g\|_2}$. 
                \ENDIF
        \ENDFOR
        \STATE Decode token $y_t$ from the final hidden representation under the chosen decoding scheme. 
    \ENDFOR
\RETURN Model response $\bm{y} = [y_0, y_1, \dots, y_T]$. 
\end{algorithmic}
\end{algorithm}

\section{Experiments}

In this section, we comprehensively evaluate USteer across diverse reasoning datasets and model architectures. 
Our empirical results demonstrate that USteer consistently mitigates hallucinations and improves reasoning accuracy, with the largest gains observed in low-confidence regions. 
We further conduct extensive ablation studies to examine the effects of key hyperparameters, including the selection of steering layers and the steering strength. 
Finally, we evaluate the robustness of USteer under stochastic decoding and demonstrate its compatibility with alternative confidence guidance scores.



\subsection{Main Results}\label{sec:main_results}

In Table~\ref{tab:main_results}, we show compare our uncertainty-based steering, 
USteer, against baselines. 
Our method consistently mitigates hallucination across datasets and models, improving reasoning accuracy with negligible latency overhead.

\paragraph{Datasets}
We consider both commonsense and mathematical reasoning tasks. 
For commonsense reasoning, we evaluate on CommonsenseQA (CSQA) \citep{talmor-etal-2019-commonsenseqa}, which assesses commonsense world knowledge in a multiple-choice format, and StrategyQA \citep{geva2021did}, which requires multi-hop reasoning for binary (yes/no) questions. 
For mathematical reasoning, we evaluate on GSM8K \citep{cobbe2021training}, which requires free-form numerical answers, and AQuA \citep{ling2017program}, which uses a multiple-choice format.
These datasets span diverse answer formats, including multiple-choice, binary, and free-form numerical outputs (see Appendix~\ref{app:implementation} for examples). 
We evaluate on the CSQA validation set (1{,}221 questions), the StrategyQA dev set (229 questions), the GSM8K test set (1{,}319 questions), and the AQuA validation split (254 questions).
To quantify sampling uncertainty arising from the finite evaluation sets, we report 95\% bootstrap confidence intervals for the greedy-decoding results in Appendix~\ref{app:greedy_ci}.

\paragraph{Models}
We evaluate on \texttt{Llama-3.2-1B-Instruct} \citep{grattafiori2024llama} and
\texttt{Qwen-2.5-3B-Instruct} \citep{qwen2025qwen25technicalreport} to assess
generality across model families.
We further demonstrate scalability to larger models by evaluating
\texttt{Qwen-2.5-14B-Instruct} and \texttt{Qwen-3-32B} \citep{yang2025qwen3technicalreport} in Appendix~\ref{app:large_model}.

\paragraph{Baselines}
Alongside standard decoding, we compare USteer against three competitive inference-time baselines, namely DoLa \citep{chuang2023dola}, ITI \citep{li2023inference}, and TruthX \citep{zhang2024truthx}. 
DoLa represents the contrastive decoding paradigm, contrasting logits from the final layer against early-exit results to alleviate hallucination during decoding. 
In contrast, ITI and TruthX are training-based steering algorithms that rely on supervised probes to find a static direction, applying the exact same displacement vector at every decoding step. 
Mechanically, ITI applies these edits exclusively to attention heads, whereas TruthX argues that attention-level editing is insufficient and extends the intervention to all internal hidden representations.

\paragraph{Hyperparameter Selection}
We select the same steering layers across all datasets for each model architecture. Because uncertainty guidance tends to be more pronounced in the upper layers, we restrict the candidate layers to the final $d$ intermediate layers of the network. 
Specifically, for \texttt{Llama-3.2-1B-Instruct}, we apply steering to the last 3 intermediate layers, while for \texttt{Qwen-2.5-3B-Instruct}, we target the last 8 intermediate layers. 

The steering strength $\alpha$ is optimized per dataset using a validation set, sweeping values from $\{0.1, 0.2, \dots, 0.9\}$. For \texttt{Llama-3.2-1B-Instruct}, the optimal strength $\alpha$ is set to $0.2$ for CSQA, $0.1$ for StrategyQA, and $0.3$ for GSM8K, while a smaller grid search yields $\alpha = 0.03$ for the more sensitive AQuA benchmark. For \texttt{Qwen-2.5-3B-Instruct}, $\alpha$ is selected as $0.9, 0.3, 0.5,$ and $0.3$ for CSQA, StrategyQA, GSM8K, and AQuA, respectively.

\paragraph{Overall Performance} 

In Table~\ref{tab:main_results}, we evaluate our steering method, USteer, alongside competitive inference-time baselines under greedy decoding ($T = 0$). 
The empirical results reveal that the training-based steering methods, ITI and TruthX, face challenges in mitigating hallucination, sometimes degrading reasoning accuracy compared to the base model. 
Such performance aligns with our intuition that steering along a static, pre-computed direction is ill-suited to accommodate the highly dynamic and evolving nature of representation spaces during a multi-step reasoning chain. 
Conversely, USteer consistently improves reasoning accuracy across all evaluated datasets and model architectures, demonstrating that leveraging dynamic, on-the-fly uncertainty signals provides a more robust and adaptive guidance mechanism for complex logical tasks.

In addition to task accuracy, we evaluate the computational efficiency of each method by comparing their inference latencies. 
Latency is reported in milliseconds per token (where lower is better) and measured end-to-end on the StrategyQA dataset using a single NVIDIA A100-80GB GPU for each model. 
Because USteer restricts its interventions to a select subset of upper intermediate layers and introduces minimal arithmetic computation per layer, it adds negligible overhead to the forward pass. 
Consequently, our method closely maintains the latency profile of standard decoding.

\vspace{5mm}
\begin{table*}[t]
\caption{
\textbf{USteer mitigates hallucination across datasets and models with negligible latency overhead.}
Performance is reported in reasoning task accuracy and latency
(ms/token; lower is better).
The best performance is shown in \textbf{bold}, and the second-best performance is \underline{underlined}.
}
\vspace{2mm}
\centering
\begin{tabularx}{0.99\linewidth}{
>{\centering\arraybackslash}X
Z{0.14}
Z{0.1}
Z{0.1}
Z{0.1}
Z{0.1}
Z{0.1}
}
\toprule
Methods
& Training-free
& Latency $\downarrow$
& CSQA
& StrategyQA
& GSM8K
& AQuA \\
\midrule

\multicolumn{7}{>{\centering\arraybackslash}p{\linewidth}}{
\textit{Model: Llama-3.2-1B-Instruct}
} \\[4pt]

Standard
& -
& 12.5
& 50.94
& 58.52
& 34.19
& \underline{37.80} \\

DoLa
& \bccheck
& 17.2
& 51.19
& 60.70
& \textbf{36.39}
& 31.50 \\

ITI
& \bccross
& 22.1
& \textbf{54.46}
& 59.39
& 35.56
& \underline{37.80} \\

TruthX
& \bccross
& 21.9
& 53.07
& \underline{61.57}
& 34.87
& 36.22 \\

\rowcolor{lightgray}
USteer (Ours)
& \bccheck
& 12.8
& \underline{54.05}
& \textbf{63.32}
& \underline{36.24}
& \textbf{39.37} \\

\midrule

\multicolumn{7}{>{\centering\arraybackslash}p{\linewidth}}{
\textit{Model: Qwen-2.5-3B-Instruct}
} \\[4pt]

Standard
& -
& 21.5
& 77.72
& \underline{68.56}
& 79.91
& 54.72 \\

DoLa
& \bccheck
& 25.8
& 78.54
& 66.81
& 78.85
& 54.72 \\

ITI
& \bccross
& 47.7
& 77.72
& 68.12
& \underline{80.89}
& \textbf{57.09} \\

TruthX
& \bccross
& 50.2
& \underline{78.62}
& 67.25
& 80.06
& \underline{55.12} \\

\rowcolor{lightgray}
USteer (Ours)
& \bccheck
& 22.8
& \textbf{79.20}
& \textbf{70.74}
& \textbf{80.97}
& \underline{55.12} \\

\bottomrule
\end{tabularx}
\label{tab:main_results}
\vspace{2mm}
\end{table*}

\subsection{Performance across Confidence Regions}

By explicitly guiding generation toward more confident regions, USteer is particularly effective at correcting low-confidence errors. 
Meanwhile, it can be less effective for confidently incorrect predictions, as steering toward more confident representations may reinforce an existing error. 

We quantitatively examine this behavior on \texttt{Llama-3.2-1B-Instruct} in Table~\ref{tab:confidence_regions}. 
Specifically, we group examples into percentiles based on their per-token softmax confidence within each dataset and then pool the corresponding confidence regions across the four datasets. 
USteer achieves larger net accuracy gains over greedy decoding than existing methods, with its advantage becoming more pronounced toward the lowest-confidence tail.

\begin{table}[t] \centering \caption{ \textbf{USteer provides larger accuracy gains in low-confidence regions.} Confidence regions are pooled across CSQA, StrategyQA, GSM8K, and AQuA. Net accuracy gains are reported in percentage points over standard greedy decoding on \texttt{Llama-3.2-1B-Instruct}. } \label{tab:confidence_regions}
\vspace{2mm}
\renewcommand{\arraystretch}{1.2} 
\setlength{\tabcolsep}{10pt}
\begin{tabular}{lcccc} \toprule Confidence Region & USteer & DoLa & ITI & TruthX \\ \midrule All examples & +2.6 & +0.7 & +2.1 & +1.3 \\ Bottom 25\% & +6.7 & +3.8 & +5.1 & +3.3 \\ Bottom 5\% & +14.4 & +8.5 & +9.8 & +9.8 \\ \bottomrule \end{tabular} \end{table}

\subsection{Effect of Steering Layers}

\begin{figure*}[t]
    \centering
    \begin{minipage}[t]{.49\textwidth}
        \centering
        \includegraphics[width=\textwidth]{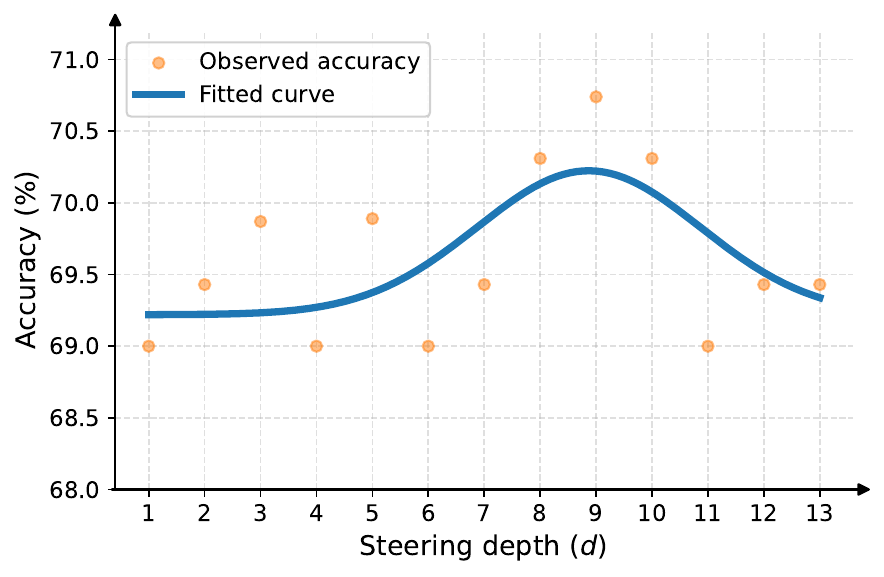}
        \caption{\textbf{Trade-off in Selecting Steering Layers}, reflecting competing effects of steering strength and guidance reliability.
        The x-axis presents steering depth {$d$}.  
        The y-axis presents task accuracy. 
        Experiments on StrategyQA using \texttt{Qwen-2.5-3B-Instruct}.}
        \label{fig:layer-tradeoff}
    \end{minipage}
    \hfill
    \begin{minipage}[t]{.49\textwidth}
        \centering
        \includegraphics[width=\textwidth]{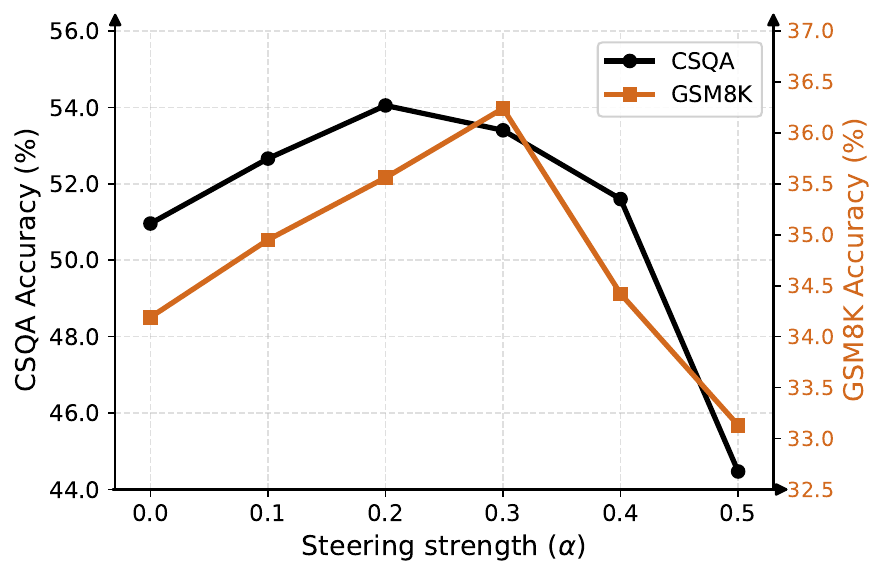}
        \caption{\textbf{USteer achieves robust performance across varying steering strengths}.
        The x-axis presents steering strength $\alpha$.  
        The y-axis present task accuracy for CSQA and GSM8K.  
        Experiments using \texttt{Llama-3.2-1B-Instruct}. }
        \label{fig:effect_strength}
    \end{minipage}
\end{figure*}

To study the effect of layer selection, we conduct experiments on StrategyQA using \texttt{Qwen-2.5-3B-Instruct}, which consists of 36 transformer layers. 
We vary the layers at which steering is applied and report task accuracy.

\vspace{2mm}
\begin{table}[t]
\centering
\caption{\textbf{USteer consistently mitigates hallucination across different selections of steering layers.}
Experiments on StrategyQA using \texttt{Qwen-2.5-3B-Instruct}. 
Accuracy reported. 
USteer improves performance when applied at lower, middle, and upper layers.
}
\vspace{3mm}
\label{tab:layer_overall_effects}
\begin{tabular}{lcccc}
\toprule
 & Standard & Lower Layers & Middle Layers & Upper Layers \\
\midrule
Accuracy (\%) & 68.56 & 69.00 & 69.43 & 69.43 \\
\bottomrule
\end{tabular}
\end{table}
\vspace{4mm}

\paragraph{Steering remains effective across different layer choices.}
We first study the overall effect of layer choice by partitioning the model into lower layers (0--11), middle layers (11--22), and upper layers (23--34). 
For each setting, we report the best accuracy over steering magnitudes $\alpha \in \{0.1, 0.2, \dots, 0.9\}$. 
As shown in Table~\ref{tab:layer_overall_effects}, applying steering within any of these layer regions consistently improves accuracy over standard decoding, demonstrating the overall robustness of uncertainty-guided steering across the transformer stack.

Despite this broad effectiveness, the magnitude of improvement varies systematically across depth. 
In particular, steering at upper layers yields larger gains than steering at lower layers. 
This trend aligns with our earlier analysis in Section~\ref{sec:method}, where uncertainty signals extracted through the logit lens exhibit stronger distinctions between correct and hallucinated generations in higher layers. 
This empirical finding justifies our heuristic of restricting the candidate steering layers to the final $d$ intermediate layers of the network.

\paragraph{Layer selection reflects a trade-off.}
In Figure~\ref{fig:layer-tradeoff}, we study the effect of steering depth {$d$} by sweeping $d$ and reporting the best accuracy over steering magnitudes $\alpha \in \{0.1, 0.2, \dots, 0.9\}$. 
We observe a trade-off across layers. 
On one hand, steering deeper layers (larger $d$) exerts a stronger and more direct influence on the final model output. 
On the other hand, uncertainty estimates obtained through the logit lens become less reliable at lower layers, where representations are less aligned with the final prediction space. 
These competing effects lead to an approximately bell-shaped performance trend as a function of steering depth. 
Nevertheless, all tested steering depths consistently outperform standard decoding (68.56 accuracy).
Finally, the optimal layer selection appears to be model-specific. 
Although a single steering depth generalizes reasonably well across datasets for a fixed model in Section~\ref{sec:main_results}, the best-performing depth varies across model families, likely reflecting differences in representational geometry and layer-wise feature evolution.

\subsection{Effect of Steering Strength}

To study the effect of steering strength $\alpha$, we conduct experiments on CSQA and GSM8K using \texttt{Llama-3.2-1B-Instruct}. 
Experimental settings, including steering layer selection, follow Section~\ref{sec:main_results}. 
In Figure~\ref{fig:effect_strength}, we plot task accuracy on both reasoning tasks as a function of $\alpha \in \{0, 0.1, 0.2, 0.3, 0.4, 0.5\}$.

We observe a trade-off in steering strength. 
When $\alpha$ is too small, the intervention has only a limited effect on the hidden representations, resulting in relatively modest performance gains over standard decoding. 
As $\alpha$ increases, steering becomes more effective and accuracy improves accordingly. 
However, excessively large steering magnitudes (e.g., $\alpha = 0.5$) begin to degrade performance, likely because overly aggressive interventions distort the underlying representation geometry and disrupt the model’s reasoning process.
Importantly, performance varies smoothly as a function of $\alpha$ for both tasks. 
A broad range of steering strengths consistently outperforms standard decoding ($\alpha = 0$), suggesting that USteer is reasonably robust to hyperparameter selection. 


\subsection{Effect of Stochastic Decoding}\label{sec:stochastic}

\begin{table}[t]
\centering
\caption{\textbf{USteer mitigates hallucination under stochastic decoding}, consistently yielding gains across a range of temperatures with statistical significance.
Results are reported on CSQA using \texttt{Llama-3.2-1B-Instruct}, with accuracy shown as the mean and 95\% confidence intervals.
}
\vspace{2mm}
\label{tab:stochastic_result}
\begin{tabular}{lcccc}
\toprule
 & temp = 0.2 & temp = 0.5 & temp = 0.8 & temp = 1.0 \\
\midrule
Standard       & $50.38 \pm 0.46$ & $50.22 \pm 0.29$ & $47.83 \pm 0.32$ & $45.44 \pm 0.38$  \\
USteer (ours)            & $53.55\pm 0.38$ & $52.22 \pm 0.42$ & $50.37 \pm 0.19$ & $48.78 \pm 0.32$ \\
\bottomrule
\end{tabular}
\end{table}
\vspace{3mm}

So far, we have evaluated USteer under greedy decoding (temperature $= 0$). 
We now examine whether the proposed steering mechanism remains effective under stochastic decoding, where randomness introduced during sampling can substantially alter the generation trajectory.
In Table~\ref{tab:stochastic_result}, we evaluate USteer on CommonsenseQA (CSQA) using \texttt{Llama-3.2-1B-Instruct} across a range of sampling temperatures $temp \in \{0.2, 0.5, 0.8, 1.0\}$. 
The remainder of the experimental setup follows Section~\ref{sec:main_results}. 
For each temperature, we perform five independent runs with different random seeds and report the mean accuracy together with 95\% confidence intervals.

As shown in the table, USteer consistently improves over standard stochastic decoding across all temperature settings. 
Although the absolute task accuracy varies with temperature, the performance gains introduced by steering remain stable and consistent. 
Moreover, the corresponding 95\% confidence intervals are non-overlapping in most settings, indicating that the improvements are statistically significant. 
These results suggest that uncertainty-guided steering is compatible with stochastic decoding and can provide reliable benefits subject to sampling noise.
Additional analyses of generation behavior under stochastic decoding are provided in Appendix~\ref{app:generation_behavior}.

\subsection{Effect of Alternative Confidence Guidance Scores} \label{sec:ablation_metric} 

So far, we have used NCI variant (Definition~\ref{def:nci}) as the confidence guidance score for USteer given its computational efficiency and effectiveness in hallucination detection in reasoning tasks. 
We now examine the general applicability of USteer by considering two additional scores: the standard softmax confidence and fDBD~\cite{liu2026out}.
Similar to NCI, fDBD was originally introduced for out-of-distribution detection in classification tasks~\citep{liu2024fast} and later adapted for hallucination detection in reasoning tasks~\cite{liu2026out}. 
Following Definition~\ref{def:nci}, we consider a squared, simplified version of fDBD score with $k = 1000$, as detailed in Appendix~\ref{app:fdbd_score}. 

Following the setup in Section~\ref{sec:main_results}, we USteer with alternative scores on \texttt{Qwen-2.5-3B-Instruct}.
For softmax confidence, we use $\alpha=1.1$ for CSQA and $\alpha=0.1$ for StrategyQA, GSM8K, and AQuA.
For fDBD, we use $\alpha=0.5$ for CSQA, StrategyQA, and GSM8K, and $\alpha=0.08$ for AQuA. 
As shown in Table~\ref{tab:alternative_metric}, both alternative confidence guidance scores improve accuracy over standard inference across all datasets, demonstrating the applicability of USteer to different confidence guidance scores. 
Meanwhile, softmax confidence yields smaller gains than NCI and fDBD. 
This result is consistent with the findings of \citet{liu2026out}, who show that softmax confidence achieves weaker hallucination detection performance on reasoning tasks than fDBD and NCI.

\begin{table}[t] 
\centering 
\caption{ \textbf{USteer improves decoding performance with alternative confidence guidance scores.}
We report accuracy (\%) on \texttt{Qwen-2.5-3B-Instruct}. 
Both fDBD and softmax confidence improve performance over standard inference across all datasets. 
} \label{tab:alternative_metric} 
\renewcommand{\arraystretch}{1.2} 
\vspace{3mm}
\begin{tabular}{lcccc} 
\toprule Method & CSQA & StrategyQA & GSM8K & AQuA \\ \midrule Standard & 77.72 & 68.56 & 79.91 & 54.72 \\ USteer w/ NCI & 79.20 & 70.74 & 80.97 & 55.12 \\ USteer w/ fDBD & 78.21 & 71.18 & 81.20 & 55.12 \\ USteer w/ softmax & 77.89 & 69.43 & 80.52 & 55.12 \\ \bottomrule \end{tabular} 
\end{table}

\paragraph{Limitations.}
While USteer is highly effective for open-weight models, it requires access to intermediate hidden states and gradients, which limits its applicability to black-box proprietary APIs.
In addition, our experiments focus on modern Transformer architectures with residual connections, which form a ``weak-to-strong'' ensemble and motivate the use of intermediate hidden states for uncertainty-guided steering. 
For potential future architectures without residual connections, we still expect USteer to provide benefits, as intermediate hidden states may continue to contain informative uncertainty signals. 
However, without the ensemble structure enabled by residual connections, these uncertainty signals may be less reliable, potentially reducing the effectiveness of USteer.

\paragraph{Broader Impacts.}
Improving the reliability of LLM reasoning is a critical step toward responsible AI deployment. However, users should remain aware that USteer is focused on redicing hallucination and does not inherently address other vital safety concerns, such as bias, fairness, or the generation of harmful content. These factors must be managed through complementary safety frameworks alongside accuracy-enhancing techniques.

\section{Conclusion}

In this work, we introduced USteer, a simple, training-free steering mechanism designed to alleviate hallucinations by leveraging model-internal uncertainty signals during inference. 
Unlike prior steering methods that rely on static displacement vectors derived from supervised probes, USteer utilizes the gradient of a geometric uncertainty guidance score to dynamically nudge intermediate representations toward high-confidence regions of the activation space. By integrating this intervention directly into decoding, we provide a proactive mechanism for hallucination mitigation with negligible overhead. 
Across multiple tasks and settings, we showed that such signals can be used proactively to guide generation toward more reliable outputs.
Our results demonstrate that uncertainty-aware mechanisms can be integrated directly into the generation process, moving beyond post-hoc filtering toward more adaptive and controllable language models.

\newpage

{
\small

\bibliography{example_paper}
\bibliographystyle{ieeenat_fullname}

}

\newpage
\appendix


\section{Implementation details}\label{app:implementation}

\subsection{Examples of Questions and Model Responses for each Dataset}

\subsubsection{CSQA Example}
\vspace{0.5em}
\begin{mdframed}[backgroundcolor=gray!10]
\textbf{Question:} A revolving door is convenient for two direction travel, but it also serves as a security measure at a what?

(a) bank

(b) library

(c) department store

(d) mall

(e) new york

\textbf{Model response:} The answer should be a place where security is a priority. Of the above choices, a bank is the most likely to use a revolving door as a security measure. So the answer is (a).

\end{mdframed}

\subsubsection{StrategyQA Example}
\vspace{0.5em}
\begin{mdframed}[backgroundcolor=gray!10]
\textbf{Question:} Yes or no: Will the Albany in Georgia reach a hundred thousand occupants before the one in New York?

\textbf{Model response:} Albany, New York, already has a population exceeding 100,000. Albany, Georgia, has a much smaller population and is unlikely to surpass the population of Albany, New York. Thus, the Albany in Georgia will not reach a hundred thousand occupants before the one in New York. So the answer is no.

\end{mdframed}

\subsubsection{GSM8K Example}
\vspace{0.5em}
\begin{mdframed}[backgroundcolor=gray!10]
\textbf{Question:} Janet’s ducks lay 16 eggs per day. She eats three for breakfast every morning and bakes muffins for her friends every day with four. She sells the remainder at the farmers' market daily for \$2 per fresh duck egg. How much in dollars does she make every day at the farmers' market?

\textbf{Model response:} 
Janet’s ducks lay 16 eggs per day. She uses 3 for breakfast and 4 for baking, which totals 3 + 4 = 7 eggs used daily. The remaining eggs are 16 - 7 = 9. She sells these for \$2 each, so 9 x 2 = 18. The answer is 18.

\end{mdframed}

\subsubsection{AQuA Example}
\vspace{0.5em}
\begin{mdframed}[backgroundcolor=gray!10]
\textbf{Question:} A password needs to contain 2 letters and 3 numbers. How many different passwords are possible if repetition of letters and numbers is allowed??

(a) 676000

(b) 676

(c) 100

(d) 6760

(e) 25

\textbf{Model response:}  There are 26 letters in the alphabet and 10 digits (0-9). Since repetition is allowed, there are 26 * 26 = 676 ways to choose the letters and 10 * 10 * 10 = 1000 ways to choose the numbers. The total number of different passwords is 676 * 1000 = 676000. The answer is (a).

\end{mdframed}

\subsection{Chain-of-Thought Few-shot Prompting}

Following \cite{wei2022chain}, we use chain-of-thought few-shot prompting to elicit reasoning in the model’s responses.
For all datasets, we adopt the canonical examples from \cite{wei2022chain}.
These prompts guide the model to produce a formatted answer after completing the reasoning process.
For answer extraction, we extract the answer choice following \emph{``So the answer is”} for CSQA, the yes/no answer following \emph{``So the answer is”} for StrategyQA, the number answer following \emph{``The answer is”} for GSM8K, and the answer choice following \emph{``The answer is”} for AQuA.
An answer is considered correct if the extracted answer matches the ground truth.

\section{{Scalability to Larger Language Model}}\label{app:large_model}

\subsection{Evaluation on \texttt{Qwen-2.5-14B-Instruct}.}

To evaluate USteer beyond the sub-8B model scale considered in our main experiments, we conduct additional experiments on \texttt{Qwen-2.5-14B-Instruct} to compare USteer with standard decoding and existing inference-time intervention methods. 
Specifically, we steer last three intermediate layers. 
We use $\alpha=0.7$ for CSQA, $\alpha=1.1$ for StrategyQA and GSM8K, and $\alpha=1.5$ for AQuA. 
The remaining experimental setup follows Section~\ref{sec:main_results}. 

As shown in Table~\ref{tab:14b_results}, USteer improves reasoning accuracy over standard decoding across all four datasets. It achieves the best performance on CSQA and AQuA and remains competitive with existing inference-time intervention methods on StrategyQA and GSM8K. These results demonstrate that USteer remains effective for larger language models. 

\subsection{Evaluation on \texttt{Qwen-3-32B}.}

We further scale our evaluation to \texttt{Qwen-3-32B}~\citep{yang2025qwen3technicalreport}.
As shown in Table~\ref{tab:large_results}, the model achieves substantially higher accuracy across datasets, leaving limited headroom for further improvement compared to smaller models. 
Nevertheless, USteer consistently improves reasoning accuracy across CSQA, StrategyQA, GSM8K, while maintaining accuracy on the challenging task of AQuA.
Overall, these results provide further evidence that uncertainty-based steering remains effective as model scale increases.

\begin{table*}[t]
\caption{
\textbf{USteer consistantly mitigates hallucination and improves reasoning accuracy on \texttt{Qwen-2.5-14B-Instruct}.}
Performance is reported in reasoning task accuracy.
The best performance is shown in \textbf{bold}, and the second-best performance is \underline{underlined}.
}
\vspace{2mm}
\centering
\begin{tabularx}{0.80\linewidth}{
>{\centering\arraybackslash}X
Z{0.1}
Z{0.1}
Z{0.1}
Z{0.1}
}
\toprule
Methods & CSQA & StrategyQA & GSM8K & AQuA \\
\midrule
Standard
& \underline{84.44}
& 72.49
& 91.51
& 75.20 \\
DoLa
& 84.03
& 73.36
& \textbf{92.19}
& \underline{75.98} \\
ITI
& 84.19
& \textbf{75.98}
& 91.58
& 73.80 \\
TruthX
& \underline{84.44}
& 74.24
& \underline{91.81}
& 74.24 \\
\rowcolor{lightgray}
USteer (Ours)
& \textbf{85.26}
& \underline{75.55}
& 91.74
& \textbf{77.17} \\
\bottomrule
\end{tabularx}
\label{tab:14b_results}
\vspace{2mm}
\end{table*}

\begin{table*}[h]
\centering
\caption{ \textbf{USteer consistently mitigates hallucination and improves reasoning accuracy on \texttt{Qwen-3-32B}.
}
We report task accuracy.
USteer consistently improves standard decoding across datasets.
}
\begin{tabular}{Z{0.20} Z{0.12} Z{0.12} Z{0.12} Z{0.1}}
\toprule
Methods & CSQA & StrategyQA & GSM8K & AQuA \\
\midrule 
Standard                    & 86.49 & 78.60 & 94.54 & 77.17\\
USteer                & 86.90 & 79.04 & 95.00 & 77.17\\
\bottomrule
\end{tabular}
\label{tab:large_results}
\end{table*}


\section{Statistical Analysis under Greedy Decoding} \label{app:greedy_ci} 

Greedy decoding is deterministic and therefore does not exhibit variation across repeated runs. Nevertheless, uncertainty in the reported accuracy arises from evaluating the methods on a finite set of test examples. To quantify this sampling uncertainty, we compute 95\% bootstrap confidence intervals by resampling examples from each evaluation set.

As shown in Table~\ref{tab:greedy_ci}, the confidence intervals are relatively wide compared with the differences among methods, precluding conclusive claims of statistical significance under greedy decoding. 
The interval widths are largely determined by dataset size: they are approximately $\pm 6$ percentage points for StrategyQA and AQuA, which contain 229 and 254 examples, respectively, compared with approximately $\pm 3$ percentage points or less for CSQA and GSM8K, which contain 1,221 and 1,319 examples. 
Thus, the statistical power of these comparisons is primarily limited by the sizes of the existing benchmark evaluation sets. 
Nevertheless, the point estimates show that USteer consistently improves over standard greedy decoding across all datasets and both models. 
These results provide complementary evidence for the effectiveness of USteer across decoding settings.

\begin{table*}[t]
\centering
\caption{
\textbf{Accuracy (\%) with 95\% bootstrap confidence intervals under greedy decoding.}
The confidence intervals quantify sampling uncertainty arising from the finite evaluation sets.
Their widths are largely determined by dataset size, with wider intervals observed on the smaller StrategyQA and AQuA evaluation sets.
}
\label{tab:greedy_ci}
\vspace{2mm}
\renewcommand{\arraystretch}{1.2}
\setlength{\tabcolsep}{12pt}
\begin{tabular}{lcccc}
\toprule
Method
& CSQA
& StrategyQA
& GSM8K
& AQuA \\
& ($n=1221$)
& ($n=229$)
& ($n=1319$)
& ($n=254$) \\
\midrule

\multicolumn{5}{c}{\textit{Model: Llama-3.2-1B-Instruct}} \\[2pt]
Standard
& $50.94 \pm 2.78$
& $58.52 \pm 6.55$
& $34.19 \pm 2.58$
& $37.80 \pm 5.91$ \\
DoLa
& $51.19 \pm 2.83$
& $60.70 \pm 6.33$
& $36.39 \pm 2.58$
& $31.50 \pm 5.51$ \\
ITI
& $54.46 \pm 2.87$
& $59.39 \pm 6.33$
& $35.56 \pm 2.54$
& $37.80 \pm 5.91$ \\
TruthX
& $53.07 \pm 2.78$
& $61.57 \pm 6.33$
& $34.87 \pm 2.58$
& $36.22 \pm 5.91$ \\
\rowcolor{lightgray}
USteer (Ours)
& $54.05 \pm 2.78$
& $63.32 \pm 6.11$
& $36.24 \pm 2.58$
& $39.37 \pm 5.91$ \\

\midrule

\multicolumn{5}{c}{\textit{Model: Qwen-2.5-3B-Instruct}} \\[2pt]
Standard
& $77.72 \pm 2.29$
& $68.56 \pm 6.11$
& $79.91 \pm 2.16$
& $54.72 \pm 6.10$ \\
DoLa
& $78.54 \pm 2.29$
& $66.81 \pm 6.11$
& $78.85 \pm 2.24$
& $54.72 \pm 6.30$ \\
ITI
& $77.72 \pm 2.33$
& $68.12 \pm 6.11$
& $80.89 \pm 2.08$
& $57.09 \pm 6.10$ \\
TruthX
& $78.62 \pm 2.29$
& $67.25 \pm 6.11$
& $80.06 \pm 2.16$
& $55.12 \pm 6.10$ \\
\rowcolor{lightgray}
USteer (Ours)
& $79.20 \pm 2.29$
& $70.74 \pm 5.90$
& $80.97 \pm 2.12$
& $55.12 \pm 6.30$ \\

\bottomrule
\end{tabular}
\end{table*}

\section{Proof of Lemma 3.2}~\label{app:proof}

\begin{proof}
Let $u = (\bm{w}_{\hat{c}}^\top \bm{z})^2$ and $v = \|\bm{z}\|_2^2$. 

By the quotient rule, $\nabla g = \frac{v\nabla u - u\nabla v}{v^2}$. 

Substituting $\nabla u = 2(\bm{w}_{\hat{c}}^\top \bm{z})\bm{w}_{\hat{c}}$ and $\nabla v = 2\bm{z}$, we have:
\begin{equation*}
\begin{aligned}
& \nabla_{\bm{z}} g & = &\frac{\|\bm{z}\|_2^2 [2(\bm{w}_{\hat{c}}^\top \bm{z})\bm{w}_{\hat{c}}] - (\bm{w}_{\hat{c}}^\top \bm{z})^2 [2\bm{z}]}{\|\bm{z}\|_2^4} \\
& & = & \frac{2(\bm{w}_{\hat{c}}^\top \bm{z})}{\|\bm{z}\|_2^2} \left( \bm{w}_{\hat{c}} - \frac{\bm{w}_{\hat{c}}^\top \bm{z}}{\|\bm{z}\|_2^2} \bm{z} \right)
\end{aligned}
\end{equation*}
\end{proof}

\section{fDBD Confidence Guidance Score.}\label{app:fdbd_score}

As an alternative to NCI, we use the fDBD score introduced for
out-of-distribution detection and subsequently adapted for hallucination
detection~\citep{liu2026out}. Let $\mathcal{C}_k(\bm{z})$ denote the $k$
highest-scoring alternative tokens under the language head, excluding the
most likely token $\hat{c}$. Following the simplified, squared formulation in
Definition~\ref{def:nci}, we define the fDBD confidence guidance score as
\begin{equation}
g_{\mathrm{fDBD}}(\bm{z})
=
\frac{1}{k}
\sum_{c \in \mathcal{C}_k(\bm{z})}
\frac{
    \big((\bm{w}_{\hat{c}}-\bm{w}_{c})^\top\bm{z}\big)^2
}{
    \lVert\bm{w}_{\hat{c}}-\bm{w}_{c}\rVert_2^2
    \lVert\bm{z}\rVert_2^2
}.
\label{eq:fdbd_guidance}
\end{equation}
We use $k=1000$ in all fDBD experiments. 
A higher score indicates greater
average separation from the decision boundaries associated with competing
tokens and thus higher confidence.

\section{Quantitative Evaluation of Intermediate-layer Uncertainty Signals} \label{app:auroc}

In addition to the qualitative evidence in Figure~\ref{fig:observation}, we quantify the ability of intermediate-layer confidence guidance scores to predict generation correctness. 
Specifically, we compute AUROC using $g(\bm{z}^l)$ from Definition~\ref{def:nci} as the prediction score and generation correctness as the binary target. 
A higher AUROC indicates stronger discrimination between correct and incorrect generations, with $50\%$ corresponding to chance-level performance. 
Table~\ref{tab:guidance_auroc} reports AUROC across four reasoning datasets and the final three layers of \texttt{Llama-3.2-1B-Instruct}, \texttt{Qwen-2.5-3B-Instruct}, and \texttt{Qwen-2.5-14B-Instruct}. 
Although the strength of the signal varies across models, datasets, and layers, the AUROC results support the distributional trend observed in Figure~\ref{fig:observation}: uncertainty signals predictive of generation correctness can emerge before the final layer. 
This further motivate using intermediate uncertainty signals to guide model during inference.

\begin{table*}[t] 
\centering 
\setlength{\tabcolsep}{3.5pt} 
\renewcommand{\arraystretch}{1.2}
\begin{tabular}{lccc} 
\toprule  
& \textbf{\texttt{Llama-3.2-1B-Instruct}} & \textbf{\texttt{Qwen-2.5-3B-Instruct}} & \textbf{\texttt{Qwen-2.5-14B-Instruct}} \\ 
& \textbf{L15 / L14 / L13} & \textbf{L35 / L34 / L33} & \textbf{L47 / L46 / L45} \\
\midrule GSM8K & {74.9} / 69.2 / 65.3 & 75.4 / 75.3 / 65.9 & 71.2 / 74.3 / 62.8 \\ 
CSQA & 60.8 / 56.5 / 55.3 & 62.8 / 58.9 / 57.6 & 69.1 / 68.7 / 65.5 \\ 
StrategyQA & 52.3 / 50.8 / 51.6 & 61.3 / 55.2 / 57.1 & 74.6 / 73.7 / 67.7 \\ 
AQuA & 50.3 / 52.9 / 53.8 & 76.5 / 69.6 / 62.3 & 75.6 / 77.9 / 62.9 \\ 
\bottomrule 
\end{tabular} 
\caption{ \textbf{Intermediate-layer uncertainty guidance scores are predictive of generation correctness across models, datasets, and layers.} The layer-wise uncertainty guidance score $g(\bm{z}^l)$ from Definition~\ref{def:nci} is used as the prediction score, with correct generations treated as the positive class. Each entry reports AUROC for the final three layers, ordered from the final layer to the two preceding layers. A higher AUROC indicates stronger discrimination between correct and incorrect generations.} 
\label{tab:guidance_auroc} 
\end{table*}

\section{Qualitative Analysis of Generated Reasoning} \label{app:qualitative_example} Qualitatively, reasoning generated with USteer can be more focused on the constraints and causal implications expressed in the question. In the example below, standard decoding focuses on an association between pets and kennels, leading to an incorrect answer. In contrast, USteer reasons that obtaining only the puppy implies a constraint on the number of pets and selects the correct answer.

\vspace{0.5em}
\begin{mdframed}[backgroundcolor=gray!10]
\paragraph{Question.} She wanted a kitten and a puppy, so why did she get only the puppy? \begin{quote} (A) one choice for pet \quad (B) cute \quad (C) kennel \quad (D) soft \quad (E) waxy \end{quote} \noindent \textbf{Correct answer:} (A) \paragraph{Generation without USteer.}  The answer must be a reason for getting a pet. Of the above choices, only kennel is a place where pets are kept. So the answer is (C).  \paragraph{Generation with USteer.}  The answer should be that the person wanted both pets but was limited by her circumstances. Of the above choices, the closest reason would be because she only had space for one pet. So the answer is (A). 
\end{mdframed}

This illustrates how USteer can produce a more relevant reasoning trajectory by identifying the implicit constraint underlying the question, rather than following a superficial association with pets.

\section{Effect of USteer on Stochastic Generation Behavior} \label{app:generation_behavior} 

We further examine how USteer affects stochastic generation behavior. 
As shown in Tables~\ref{tab:top1_selection} and~\ref{tab:trajectory_confidence}, USteer modestly increases selection of the most likely token and produces trajectories with higher average predictive probabilities. USteer also reduces the average number of distinct answers across five sampling runs (Table~\ref{tab:answer_diversity}), indicating more consistent generations. 
However, USteer is not simply equivalent to reverting to greedy or lower-temperature decoding. 
Average generation length increases similarly with temperature both with and without USteer (Table~\ref{tab:generation_length}), suggesting that USteer preserves the temperature-dependent behavior of stochastic decoding while guiding generation toward more confident trajectories.

\begin{table}[!htbp] \centering \caption{ \textbf{USteer modestly increases the selection of the most likely token.} We report the percentage of generation steps at which the token with the highest predictive probability is selected under different sampling temperatures. } \label{tab:top1_selection} \vspace{2mm} \renewcommand{\arraystretch}{1.2} \setlength{\tabcolsep}{9pt} \begin{tabular}{lcccc} \toprule Temperature & 0.2 & 0.5 & 0.8 & 1.0 \\ \midrule Standard & 96.3\% & 90.6\% & 83.7\% & 77.6\% \\ USteer & 97.1\% & 92.4\% & 86.4\% & 81.1\% \\ \bottomrule \end{tabular} \end{table}

\begin{table}[!htbp] \centering \caption{ \textbf{USteer produces more confident generation trajectories.} We report the average predictive probability of the sampled tokens under different sampling temperatures. } \label{tab:trajectory_confidence} \vspace{2mm} \renewcommand{\arraystretch}{1.2} \setlength{\tabcolsep}{9pt} \begin{tabular}{lcccc} \toprule Temperature & 0.2 & 0.5 & 0.8 & 1.0 \\ \midrule Standard & 0.767 & 0.751 & 0.718 & 0.675 \\ USteer & 0.800 & 0.786 & 0.756 & 0.719 \\ \bottomrule \end{tabular} \end{table}

\begin{table}[!htbp] \centering \caption{ \textbf{USteer produces more consistent answers across repeated sampling runs.} We report the average number of distinct answers generated across five runs for each question. Lower values indicate less variation among the generated answers. } \label{tab:answer_diversity} \vspace{2mm} \renewcommand{\arraystretch}{1.2} \setlength{\tabcolsep}{9pt} \begin{tabular}{lcccc} \toprule Temperature & 0.2 & 0.5 & 0.8 & 1.0 \\ \midrule Standard & 1.52 & 1.86 & 2.15 & 2.37 \\ USteer & 1.36 & 1.68 & 1.96 & 2.20 \\ \bottomrule \end{tabular} \end{table}

\begin{table}[!htbp] \centering \caption{ \textbf{Generation length exhibits a similar temperature-dependent trend with and without USteer.} We report the average generation length under greedy decoding ($T=0$) and stochastic decoding at different sampling temperatures. } \label{tab:generation_length} \vspace{2mm} \renewcommand{\arraystretch}{1.2} \setlength{\tabcolsep}{8pt} \begin{tabular}{lccccc} \toprule Temperature & 0.0 & 0.2 & 0.5 & 0.8 & 1.0 \\ \midrule Standard & 34.81 & 34.89 & 35.25 & 36.01 & 37.16 \\ USteer & 35.89 & 35.81 & 35.96 & 36.42 & 37.83 \\ \bottomrule \end{tabular} \end{table}

\newpage

\FloatBarrier
\newpage
\input{checklist.tex}

\end{document}

%% file: checklist.tex
\section*{NeurIPS Paper Checklist}

\begin{enumerate}

\item {\bf Claims}
    \item[] Question: Do the main claims made in the abstract and introduction accurately reflect the paper's contributions and scope?
    \item[] Answer: \answerYes{} 
    \item[] Justification: We introduce a training-free steering method that alleviate hallucination in reasoning. 
    \item[] Guidelines:
    \begin{itemize}
        \item The answer \answerNA{} means that the abstract and introduction do not include the claims made in the paper.
        \item The abstract and/or introduction should clearly state the claims made, including the contributions made in the paper and important assumptions and limitations. A \answerNo{} or \answerNA{} answer to this question will not be perceived well by the reviewers. 
        \item The claims made should match theoretical and experimental results, and reflect how much the results can be expected to generalize to other settings. 
        \item It is fine to include aspirational goals as motivation as long as it is clear that these goals are not attained by the paper. 
    \end{itemize}

\item {\bf Limitations}
    \item[] Question: Does the paper discuss the limitations of the work performed by the authors?
    \item[] Answer: \answerYes{} 
    \item[] Justification: We include a "Limitations" section.
    \item[] Guidelines:
    \begin{itemize}
        \item The answer \answerNA{} means that the paper has no limitation while the answer \answerNo{} means that the paper has limitations, but those are not discussed in the paper. 
        \item The authors are encouraged to create a separate ``Limitations'' section in their paper.
        \item The paper should point out any strong assumptions and how robust the results are to violations of these assumptions (e.g., independence assumptions, noiseless settings, model well-specification, asymptotic approximations only holding locally). The authors should reflect on how these assumptions might be violated in practice and what the implications would be.
        \item The authors should reflect on the scope of the claims made, e.g., if the approach was only tested on a few datasets or with a few runs. In general, empirical results often depend on implicit assumptions, which should be articulated.
        \item The authors should reflect on the factors that influence the performance of the approach. For example, a facial recognition algorithm may perform poorly when image resolution is low or images are taken in low lighting. Or a speech-to-text system might not be used reliably to provide closed captions for online lectures because it fails to handle technical jargon.
        \item The authors should discuss the computational efficiency of the proposed algorithms and how they scale with dataset size.
        \item If applicable, the authors should discuss possible limitations of their approach to address problems of privacy and fairness.
        \item While the authors might fear that complete honesty about limitations might be used by reviewers as grounds for rejection, a worse outcome might be that reviewers discover limitations that aren't acknowledged in the paper. The authors should use their best judgment and recognize that individual actions in favor of transparency play an important role in developing norms that preserve the integrity of the community. Reviewers will be specifically instructed to not penalize honesty concerning limitations.
    \end{itemize}

\item {\bf Theory assumptions and proofs}
    \item[] Question: For each theoretical result, does the paper provide the full set of assumptions and a complete (and correct) proof?
    \item[] Answer: \answerYes{} 
    \item[] Justification: We provide the full set of assumptions and a complete (and correct) proof for each theoretical result. 
    \item[] Guidelines:
    \begin{itemize}
        \item The answer \answerNA{} means that the paper does not include theoretical results. 
        \item All the theorems, formulas, and proofs in the paper should be numbered and cross-referenced.
        \item All assumptions should be clearly stated or referenced in the statement of any theorems.
        \item The proofs can either appear in the main paper or the supplemental material, but if they appear in the supplemental material, the authors are encouraged to provide a short proof sketch to provide intuition. 
        \item Inversely, any informal proof provided in the core of the paper should be complemented by formal proofs provided in appendix or supplemental material.
        \item Theorems and Lemmas that the proof relies upon should be properly referenced. 
    \end{itemize}

    \item {\bf Experimental result reproducibility}
    \item[] Question: Does the paper fully disclose all the information needed to reproduce the main experimental results of the paper to the extent that it affects the main claims and/or conclusions of the paper (regardless of whether the code and data are provided or not)?
    \item[] Answer: \answerYes{} 
    \item[] Justification: We fully disclose all the information for reproduction. 
    \item[] Guidelines:
    \begin{itemize}
        \item The answer \answerNA{} means that the paper does not include experiments.
        \item If the paper includes experiments, a \answerNo{} answer to this question will not be perceived well by the reviewers: Making the paper reproducible is important, regardless of whether the code and data are provided or not.
        \item If the contribution is a dataset and\slash or model, the authors should describe the steps taken to make their results reproducible or verifiable. 
        \item Depending on the contribution, reproducibility can be accomplished in various ways. For example, if the contribution is a novel architecture, describing the architecture fully might suffice, or if the contribution is a specific model and empirical evaluation, it may be necessary to either make it possible for others to replicate the model with the same dataset, or provide access to the model. In general. releasing code and data is often one good way to accomplish this, but reproducibility can also be provided via detailed instructions for how to replicate the results, access to a hosted model (e.g., in the case of a large language model), releasing of a model checkpoint, or other means that are appropriate to the research performed.
        \item While NeurIPS does not require releasing code, the conference does require all submissions to provide some reasonable avenue for reproducibility, which may depend on the nature of the contribution. For example
        \begin{enumerate}
            \item If the contribution is primarily a new algorithm, the paper should make it clear how to reproduce that algorithm.
            \item If the contribution is primarily a new model architecture, the paper should describe the architecture clearly and fully.
            \item If the contribution is a new model (e.g., a large language model), then there should either be a way to access this model for reproducing the results or a way to reproduce the model (e.g., with an open-source dataset or instructions for how to construct the dataset).
            \item We recognize that reproducibility may be tricky in some cases, in which case authors are welcome to describe the particular way they provide for reproducibility. In the case of closed-source models, it may be that access to the model is limited in some way (e.g., to registered users), but it should be possible for other researchers to have some path to reproducing or verifying the results.
        \end{enumerate}
    \end{itemize}

\item {\bf Open access to data and code}
    \item[] Question: Does the paper provide open access to the data and code, with sufficient instructions to faithfully reproduce the main experimental results, as described in supplemental material?
    \item[] Answer: \answerNo{} 
    \item[] Justification: While the paper uses publicly available datasets and model, the code is not currently released, as the release is subject to legal and compliance review; therefore, full open access is not provided at submission time.
    \item[] Guidelines:
    \begin{itemize}
        \item The answer \answerNA{} means that paper does not include experiments requiring code.
        \item Please see the NeurIPS code and data submission guidelines (\url{https://neurips.cc/public/guides/CodeSubmissionPolicy}) for more details.
        \item While we encourage the release of code and data, we understand that this might not be possible, so \answerNo{} is an acceptable answer. Papers cannot be rejected simply for not including code, unless this is central to the contribution (e.g., for a new open-source benchmark).
        \item The instructions should contain the exact command and environment needed to run to reproduce the results. See the NeurIPS code and data submission guidelines (\url{https://neurips.cc/public/guides/CodeSubmissionPolicy}) for more details.
        \item The authors should provide instructions on data access and preparation, including how to access the raw data, preprocessed data, intermediate data, and generated data, etc.
        \item The authors should provide scripts to reproduce all experimental results for the new proposed method and baselines. If only a subset of experiments are reproducible, they should state which ones are omitted from the script and why.
        \item At submission time, to preserve anonymity, the authors should release anonymized versions (if applicable).
        \item Providing as much information as possible in supplemental material (appended to the paper) is recommended, but including URLs to data and code is permitted.
    \end{itemize}

\item {\bf Experimental setting/details}
    \item[] Question: Does the paper specify all the training and test details (e.g., data splits, hyperparameters, how they were chosen, type of optimizer) necessary to understand the results?
    \item[] Answer: \answerYes{} 
    \item[] Justification: Experiments are detailed in section Experiments and Appendix. 
    \item[] Guidelines:
    \begin{itemize}
        \item The answer \answerNA{} means that the paper does not include experiments.
        \item The experimental setting should be presented in the core of the paper to a level of detail that is necessary to appreciate the results and make sense of them.
        \item The full details can be provided either with the code, in appendix, or as supplemental material.
    \end{itemize}

\item {\bf Experiment statistical significance}
    \item[] Question: Does the paper report error bars suitably and correctly defined or other appropriate information about the statistical significance of the experiments?
    \item[] Answer: \answerYes{} 
    \item[] Justification: Some of our experiments are deterministic, using greedy decoding with standard datasets and off-the-shelf models. For experiments involving stochasticity, we report 95\% confidence intervals to assess statistical significance.
    \item[] Guidelines:
    \begin{itemize}
        \item The answer \answerNA{} means that the paper does not include experiments.
        \item The authors should answer \answerYes{} if the results are accompanied by error bars, confidence intervals, or statistical significance tests, at least for the experiments that support the main claims of the paper.
        \item The factors of variability that the error bars are capturing should be clearly stated (for example, train/test split, initialization, random drawing of some parameter, or overall run with given experimental conditions).
        \item The method for calculating the error bars should be explained (closed form formula, call to a library function, bootstrap, etc.)
        \item The assumptions made should be given (e.g., Normally distributed errors).
        \item It should be clear whether the error bar is the standard deviation or the standard error of the mean.
        \item It is OK to report 1-sigma error bars, but one should state it. The authors should preferably report a 2-sigma error bar than state that they have a 96\% CI, if the hypothesis of Normality of errors is not verified.
        \item For asymmetric distributions, the authors should be careful not to show in tables or figures symmetric error bars that would yield results that are out of range (e.g., negative error rates).
        \item If error bars are reported in tables or plots, the authors should explain in the text how they were calculated and reference the corresponding figures or tables in the text.
    \end{itemize}

\item {\bf Experiments compute resources}
    \item[] Question: For each experiment, does the paper provide sufficient information on the computer resources (type of compute workers, memory, time of execution) needed to reproduce the experiments?
    \item[] Answer: \answerYes{} 
    \item[] Justification: We report the GPU type and computational time in "Experiments" section.
    \item[] Guidelines:
    \begin{itemize}
        \item The answer \answerNA{} means that the paper does not include experiments.
        \item The paper should indicate the type of compute workers CPU or GPU, internal cluster, or cloud provider, including relevant memory and storage.
        \item The paper should provide the amount of compute required for each of the individual experimental runs as well as estimate the total compute. 
        \item The paper should disclose whether the full research project required more compute than the experiments reported in the paper (e.g., preliminary or failed experiments that didn't make it into the paper). 
    \end{itemize}
    
\item {\bf Code of ethics}
    \item[] Question: Does the research conducted in the paper conform, in every respect, with the NeurIPS Code of Ethics \url{https://neurips.cc/public/EthicsGuidelines}?
    \item[] Answer: \answerYes{} 
    \item[] Justification: The research conducted in the paper conforms, in every respect, with the NeurIPS Code of Ethics. 
    \item[] Guidelines:
    \begin{itemize}
        \item The answer \answerNA{} means that the authors have not reviewed the NeurIPS Code of Ethics.
        \item If the authors answer \answerNo, they should explain the special circumstances that require a deviation from the Code of Ethics.
        \item The authors should make sure to preserve anonymity (e.g., if there is a special consideration due to laws or regulations in their jurisdiction).
    \end{itemize}

\item {\bf Broader impacts}
    \item[] Question: Does the paper discuss both potential positive societal impacts and negative societal impacts of the work performed?
    \item[] Answer: \answerYes{} 
    \item[] Justification: We include a "Broader Impacts" section.
    \item[] Guidelines:
    \begin{itemize}
        \item The answer \answerNA{} means that there is no societal impact of the work performed.
        \item If the authors answer \answerNA{} or \answerNo, they should explain why their work has no societal impact or why the paper does not address societal impact.
        \item Examples of negative societal impacts include potential malicious or unintended uses (e.g., disinformation, generating fake profiles, surveillance), fairness considerations (e.g., deployment of technologies that could make decisions that unfairly impact specific groups), privacy considerations, and security considerations.
        \item The conference expects that many papers will be foundational research and not tied to particular applications, let alone deployments. However, if there is a direct path to any negative applications, the authors should point it out. For example, it is legitimate to point out that an improvement in the quality of generative models could be used to generate Deepfakes for disinformation. On the other hand, it is not needed to point out that a generic algorithm for optimizing neural networks could enable people to train models that generate Deepfakes faster.
        \item The authors should consider possible harms that could arise when the technology is being used as intended and functioning correctly, harms that could arise when the technology is being used as intended but gives incorrect results, and harms following from (intentional or unintentional) misuse of the technology.
        \item If there are negative societal impacts, the authors could also discuss possible mitigation strategies (e.g., gated release of models, providing defenses in addition to attacks, mechanisms for monitoring misuse, mechanisms to monitor how a system learns from feedback over time, improving the efficiency and accessibility of ML).
    \end{itemize}
    
\item {\bf Safeguards}
    \item[] Question: Does the paper describe safeguards that have been put in place for responsible release of data or models that have a high risk for misuse (e.g., pre-trained language models, image generators, or scraped datasets)?
    \item[] Answer: \answerNA{} 
    \item[] Justification: The paper poses no such risks.
    \item[] Guidelines:
    \begin{itemize}
        \item The answer \answerNA{} means that the paper poses no such risks.
        \item Released models that have a high risk for misuse or dual-use should be released with necessary safeguards to allow for controlled use of the model, for example by requiring that users adhere to usage guidelines or restrictions to access the model or implementing safety filters. 
        \item Datasets that have been scraped from the Internet could pose safety risks. The authors should describe how they avoided releasing unsafe images.
        \item We recognize that providing effective safeguards is challenging, and many papers do not require this, but we encourage authors to take this into account and make a best faith effort.
    \end{itemize}

\item {\bf Licenses for existing assets}
    \item[] Question: Are the creators or original owners of assets (e.g., code, data, models), used in the paper, properly credited and are the license and terms of use explicitly mentioned and properly respected?
    \item[] Answer: \answerYes{} 
    \item[] Justification: All licenses are respected.
    \item[] Guidelines:
    \begin{itemize}
        \item The answer \answerNA{} means that the paper does not use existing assets.
        \item The authors should cite the original paper that produced the code package or dataset.
        \item The authors should state which version of the asset is used and, if possible, include a URL.
        \item The name of the license (e.g., CC-BY 4.0) should be included for each asset.
        \item For scraped data from a particular source (e.g., website), the copyright and terms of service of that source should be provided.
        \item If assets are released, the license, copyright information, and terms of use in the package should be provided. For popular datasets, \url{paperswithcode.com/datasets} has curated licenses for some datasets. Their licensing guide can help determine the license of a dataset.
        \item For existing datasets that are re-packaged, both the original license and the license of the derived asset (if it has changed) should be provided.
        \item If this information is not available online, the authors are encouraged to reach out to the asset's creators.
    \end{itemize}

\item {\bf New assets}
    \item[] Question: Are new assets introduced in the paper well documented and is the documentation provided alongside the assets?
    \item[] Answer: \answerNA{} 
    \item[] Justification: The paper does not release new assets.
    \item[] Guidelines:
    \begin{itemize}
        \item The answer \answerNA{} means that the paper does not release new assets.
        \item Researchers should communicate the details of the dataset\slash code\slash model as part of their submissions via structured templates. This includes details about training, license, limitations, etc. 
        \item The paper should discuss whether and how consent was obtained from people whose asset is used.
        \item At submission time, remember to anonymize your assets (if applicable). You can either create an anonymized URL or include an anonymized zip file.
    \end{itemize}

\item {\bf Crowdsourcing and research with human subjects}
    \item[] Question: For crowdsourcing experiments and research with human subjects, does the paper include the full text of instructions given to participants and screenshots, if applicable, as well as details about compensation (if any)? 
    \item[] Answer: \answerNA{} 
    \item[] Justification: The paper does not involve crowdsourcing nor research with human subjects.
    \item[] Guidelines:
    \begin{itemize}
        \item The answer \answerNA{} means that the paper does not involve crowdsourcing nor research with human subjects.
        \item Including this information in the supplemental material is fine, but if the main contribution of the paper involves human subjects, then as much detail as possible should be included in the main paper. 
        \item According to the NeurIPS Code of Ethics, workers involved in data collection, curation, or other labor should be paid at least the minimum wage in the country of the data collector. 
    \end{itemize}

\item {\bf Institutional review board (IRB) approvals or equivalent for research with human subjects}
    \item[] Question: Does the paper describe potential risks incurred by study participants, whether such risks were disclosed to the subjects, and whether Institutional Review Board (IRB) approvals (or an equivalent approval/review based on the requirements of your country or institution) were obtained?
    \item[] Answer: \answerNA{} 
    \item[] Justification: The paper does not involve crowdsourcing nor research with human subjects.
    \item[] Guidelines:
    \begin{itemize}
        \item The answer \answerNA{} means that the paper does not involve crowdsourcing nor research with human subjects.
        \item Depending on the country in which research is conducted, IRB approval (or equivalent) may be required for any human subjects research. If you obtained IRB approval, you should clearly state this in the paper. 
        \item We recognize that the procedures for this may vary significantly between institutions and locations, and we expect authors to adhere to the NeurIPS Code of Ethics and the guidelines for their institution. 
        \item For initial submissions, do not include any information that would break anonymity (if applicable), such as the institution conducting the review.
    \end{itemize}

\item {\bf Declaration of LLM usage}
    \item[] Question: Does the paper describe the usage of LLMs if it is an important, original, or non-standard component of the core methods in this research? Note that if the LLM is used only for writing, editing, or formatting purposes and does \emph{not} impact the core methodology, scientific rigor, or originality of the research, declaration is not required.
    \item[] Answer: \answerNA{} 
    \item[] Justification: The core method development in this research does not involve LLMs as any important, original, or non-standard components.
    \item[] Guidelines:
    \begin{itemize}
        \item The answer \answerNA{} means that the core method development in this research does not involve LLMs as any important, original, or non-standard components.
        \item Please refer to our LLM policy in the NeurIPS handbook for what should or should not be described.
    \end{itemize}

\end{enumerate}